\documentclass[11pt]{article}

\usepackage[preprint]{acl}
\usepackage[ruled,linesnumbered]{algorithm2e} % 在导言区引用
\usepackage{amsmath}
\usepackage{amssymb}
\usepackage{amsthm}
\usepackage{caption} % 必须引用，用于在同一个环境里区分 table 和 figure 标题
\usepackage{subcaption}
\usepackage{times}
\usepackage{latexsym}
\newtheorem{proposition}{Proposition} % 定义“命题”环境
\usepackage{pgfplots}    % 定义柱状图环境
\usepackage[table]{xcolor}
\usepackage[most]{tcolorbox}
\newtcolorbox{PromptBox}[1]{
    breakable,
    colback=gray!5!white, colframe=black, sharp corners, boxrule=0.8pt,
    left=8pt, right=8pt, fontupper=\small, enhanced,
    attach boxed title to top left={xshift=-0.8pt, yshift=-0.8pt},
    boxed title style={sharp corners, colback=black},
    title=\textbf{#1}
}

\usepackage{booktabs}
\usepackage{multirow}
\usepackage[table]{xcolor}
\usepackage{colortbl}
\usepackage{pifont}

\definecolor{lightgray}{gray}{0.95}
\definecolor{gain}{HTML}{008000}
\definecolor{avgblue}{HTML}{E8F0FE} % 浅蓝色背景

\usepackage[T1]{fontenc}
\usepackage[utf8]{inputenc}

\usepackage{microtype}

\usepackage{inconsolata}

\usepackage{graphicx}

\title{Stabilizing Unsupervised Self-Evolution of MLLMs via Continuous Softened Retracing reSampling}

\author{
  \textbf{Yunyao Yu}\textsuperscript{1,$^*$},
  \textbf{Zhengxian Wu}\textsuperscript{1,$^*$},
  \textbf{Zhuohong Chen}\textsuperscript{1,$^*$},
  \textbf{Hangrui Xu}\textsuperscript{2},
  \textbf{Zirui Liao}\textsuperscript{1}, \\ 
  \textbf{Xiangwen Deng}\textsuperscript{3},
  \textbf{Zhifang Liu}\textsuperscript{1},
  \textbf{Senyuan Shi}\textsuperscript{4},
  \textbf{Haoqian Wang}\textsuperscript{1,$\dagger$} \\
  \textsuperscript{1}Tsinghua University, \textsuperscript{2}Hefei University of Technology, \\
  \textsuperscript{3}University of Arizona,
  \textsuperscript{4}MAIS, Institute of Automation, Chinese Academy of Sciences \\
}

\begin{document}
\maketitle
\begin{abstract}
In the unsupervised self-evolution of Multimodal Large Language Models, the quality of feedback signals during post-training is pivotal for stable and effective learning. However, existing self-evolution methods predominantly rely on majority voting to select the most frequent output as the pseudo-golden answer, which may stem from the model's intrinsic biases rather than guaranteeing the objective correctness of the reasoning paths. To counteract the degradation, we propose \textbf{C}ontinuous \textbf{S}oftened \textbf{R}etracing re\textbf{S}ampling (\textbf{CSRS}) in MLLM self-evolution. Specifically, we introduce a Retracing Re-inference Mechanism (\textbf{RRM}) that the model re-inferences from  anchor points to expand the exploration of long-tail reasoning paths. Simultaneously, we propose Softened Frequency Reward (\textbf{SFR}), which replaces binary rewards with continuous signals, calibrating reward  based on the answers' frequency across sampled reasoning sets. Furthermore, incorporated with Visual Semantic Perturbation (\textbf{VSP}), CSRS ensures the model prioritizes mathematical logic over visual superficiality. Experimental results demonstrate that CSRS significantly enhances the reasoning performance of Qwen2.5-VL-7B on benchmarks such as MathVision. We achieve state-of-the-art (SOTA) results in unsupervised self-evolution on geometric tasks. Our code is avaible at \url{https://github.com/yyy195/CSRS}.
\end{abstract}

\section{Introduction}

Self-evolution \citep{wu_meta-rewarding_2024,chen_mathse_2025,jiang_bootstrapping_2025,DBLP:journals/corr/abs-2511-16672,wu2026modelsjudgethemselvesunsupervised} in Multimodal Large Language Models (MLLMs) is an emerging paradigm that leverages the model's intrinsic reasoning capabilities to achieve continuous improvement through internal feedback. To alleviate the dependency on costly manual annotations, this unsupervised framework demonstrates immense potential by bypassing the labeling bottleneck and it enables autonomous  evolution of the model's capablity through leveraging unlabeled data across tasks such as geometric reasoning.

Recent works have focused on generating self-improving feedback signals. For instance, MM-UPT \citep{wei_first_2025} establishes an unsupervised post-training pipeline, utilizing a majority voting self-rewarding mechanism based on the GRPO \citep{shao2024deepseekmath} algorithm. Similarly, VisPlay \citep{he2025visplay} enhances visual reasoning capabilities and reduces hallucinations by decoupling the base  MLLM into two roles: a conditional questioner and a multimodal reasoner. 

However, rewards of these approaches primarily generate from the model's internal preferences, leading to a confirmation bias cycle: the model repeatedly reinforces  paths it perceives as correct. This narrows the exploration horizon, eventually resulting in the loss of logical diversity and distributional shift often called "Model Collapse" \citep{shumailov2023curse,shafayat_can_2025,swamy2025all}. Furthermore, these methods rely heavily on majority voting, which provides only coarse-grained and sparse reward signals derived entirely from the model’s initial biases, thereby exacerbating the model collapse.

\begin{figure*}[t]
  \centering
  \includegraphics[width=1.0\linewidth]{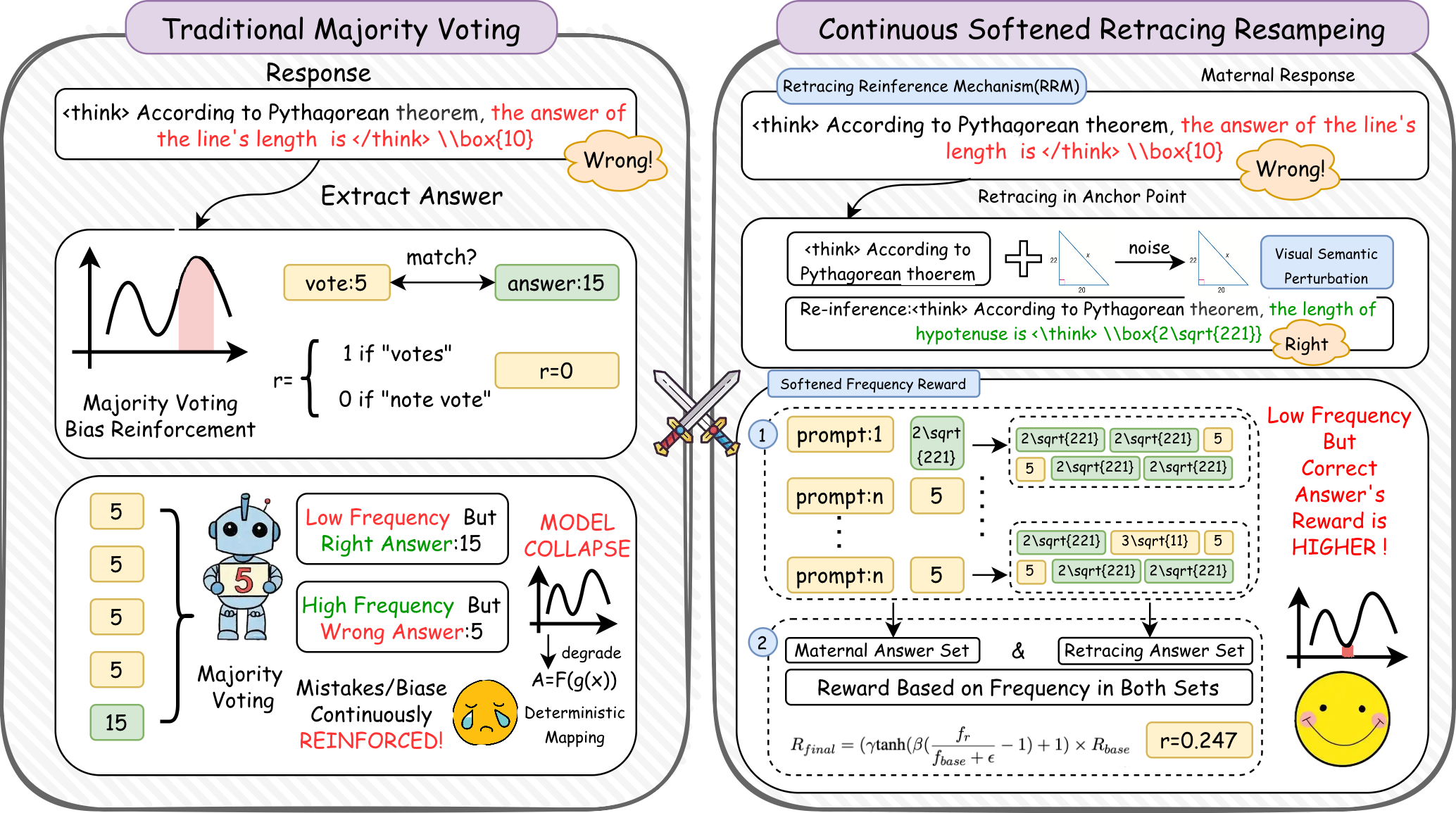} % 调整 width 来控制图片大小
  \caption{\textbf{Overview of our work.} During unsupervised self-evolutionary reinforcement learning, traditional majority voting methods (left) for pseudo-labeling rely solely on the model's inherent biases. This frequently leads to model collapse, where the model degenerates into a deterministic mapping and fails to explore the true solutions. To alleviate this phenomenon, our method introduces CSRS (right) to reduce the occurence of situations.}
  \label{pipeline}
\end{figure*}

To overcome these challenges on geometric reasoning tasks, we propose \textbf{C}ontinuous \textbf{S}oftened \textbf{R}etracing re\textbf{S}ampling (\textbf{CSRS}). As shown in Fig. \ref{pipeline}, by establishing retracing anchor points within the maternal trajectory -- initial responses and integrating softened frequency rewards with visual perturbation, CSRS effectively mitigates model collapse in unsupervised settings. Specifically, we introduce Retracing Re-inference Mechanism (\textbf{RRM}), which sets a retracing anchor point within the maternal response answers and restarts inference from anchor points to construct a re-inference answer set. Unlike static sampling in conventional methods, this mechanism compels the model to perform deep exploration at key logical decision nodes. To address the collapse triggered by majority voting, we introduce Softened Frequency Reward (\textbf{SFR}). Rather than assigning discrete binary scores, this mechanism calculates base rewards based on the occurrence frequency of the problem's answer in the union of the re-inference and maternal sets. By utilizing the frequency variance between two sets to calibrate the reward score, we incentivize the model to explore low-probability but correct responses. Additionally, we introduce Visual Semantic Perturbation (\textbf{VSP}) during the re-inference phase, forcing the model to further prioritize invariant mathematical logic over superficially low-level image features.

We conducted a comprehensive evaluation of CSRS across four major multimodal geometric reasoning benchmarks based on Qwen2.5-VL-7B \citep{bai2025qwen2}: MathVision \citep{wang2024measuring}, MathVista \citep{lu2023mathvista} MathVerse \citep{zhang2024mathverse} and We-Math \citep{qiao2025we}. Notably, compared to the baseline (MM-UPT), we gains 2.34\%, 1.48\%, 2.41\%, 3.28\% respectively on benchmarks. Experimental results show that CSRS achieves state-of-the-art (SOTA) performance in the unsupervised self-evolution.  The primary contributions of this work are as follows:

(1) We propose the CSRS, which uses retracing re-inference anchor points combined with softened frequency rewards to theoretically and empirically mitigate the model collapse inherent in majority voting mechanism.

(2) Moving beyond binary rewards, CSRS dynamically amplifies the rewards for low-frequency but potentially correct long-tail reasoning paths, significantly improving the robustness of MLLM self-evolution's inference capability.

(3) CSRS achieves leading performance across four multimodal mathematical benchmarks. In particular, on MathVista, it reaches an accuracy of 68.25\%, demonstrating the substantial potential of unsupervised self-evolution without the need for expensive human intervention.

\section{Related Work}

Reinforcement Learning (RL) \citep{xu2025single,yu2025gpo,lin2025understanding,chen2026r3greasoningretrievalrerankingframeworkvisioncentric} has emerged as a pivotal tool for unlocking the latent reasoning capabilities of Large Language Models (LLMs). These advancements span from established algorithms like PPO \citep{schulman2017proximal} and DPO \citep{rafailov2023direct} to the recently introduced GRPO \citep{shao2024deepseekmath}. However, these methods primarily operate within the paradigm of supervised reinforcement learning, where reliable parameter updates relies heavily on high-quality, external ground-truth labels. This strict dependency on external signals inherently constrains the model’s capacity for autonomous evolution in scenarios where human annotations are expensive or unavailable.

To bridge this gap, the unsupervised self-evolution paradigm has gained significant traction as it enables models to autonomously generate pseudo-labels to guide their own iterative optimization \citep{liu_spice_2025,kwan_opensir_2025}. Existing works attempts to substitute human labels with signals from diverse sources.  AbsoluteZero \citep{zhao_absolute_2025} uses external verifiers, such as math or code executors (e.g., MathVerify) to validate reasoning outcomes. While this reduces manual labeling effort, the feedback remains constrained to the coverage of external rules, essentially remaining a form of supervised training \citep{wang_enhancing_2025,dong2025agentic}. Other researchs seeks complete independence from external intervention. MM-UPT \citep{wei_first_2025} introduces a framework that adopts majority voting results among responses as pseudo-labels. EvolRL \citep{zhou2025evolving} prioritizes the stability of majority voting while refining rewards with novelty scores to enhance exploratory diversity. MultiAgentEvolve \citep{chen_multi-agent_2025} and VisPlay \citep{he2025visplay} use self-play dynamics by decoupling the model into two roles: a challenging questioner and an adaptive responder.

Despite these explorations into unsupervised self-evolution, most existing approaches are restricted to providing binary (0/1) hard rewards based solely on the final answer \citep{kwan_opensir_2025,liu_spice_2025}. Such coarse-grained feedback is insufficient for guiding the model through critical reasoning points. Worse still, these signals exacerbate a self-reinforcement trap, where the model continuously amplifies its initial distributional biases, accelerating the onset of model collapse. This degradation in signal quality ultimately leads to  model collapse, causing training trajectories to deviate from valid logical reasoning.

\section{Methodology}

This section is structured as follows: First, Section 3.1 provides a theoretical analysis of the underlying causes of model collapse in MLLM self-evolution under conventional majority voting, followed by how to mitigate this issue through softened frequency rewards. Subsequently, Section 3.2 introduces the CSRS framework, detailing its technical components including retracing re-inference, softened frequency rewards and visual semantic perturbation strategies. 

\subsection{Preliminaries}
\label{math_proof}

\subsubsection{Continuous Rewards as a Stabilizer Compared to 0-1 Rewards}

This study is grounded in the Group Relative Policy Optimization (GRPO) algorithm. In an unsupervised self-evolution scenario, given a reference distribution $P_{ref}$, the model updates its policy $P$ by maximizing the relative advantage of intra-group samples $A(x) = r(x) - \bar{r}$ subject to a KL divergence constraint. The objective function is: \begin{equation}
    \max_{P} \mathbb{E}_{x \sim P}[A(x)] - \beta \mathbb{D}_{KL}(P \| P_{ref})
\end{equation} By constructing the Lagrangian functional:
\vspace{-4pt}
\begin{align}
L(P) = & \sum_x P(x)A(x) - \beta \sum_x P(x) \log \frac{P(x)}{P_{ref}(x)} \nonumber \\
& + \lambda \left( \sum_x P(x) - 1 \right) \label{eq:lagrangian}
\end{align}
and taking the derivative with respect to $P(x)$, we derive the ideal closed-form solution for policy iteration during self-evolution:
\vspace{-4pt}
\begin{equation}
P_{n+1}(x) = \frac{P_n(x) e^{A_n(x)/\beta}}{Z}
\end{equation}

where $Z$ is the normalization constant, $\beta$ is KL regularization weight and $n$ denotes the number of self-evolution iteration. This derivation reveals that the essence of unsupervised self-evolution is an \textbf{exponential self-reinforcement process}, where the evolutionary trajectory of the distribution is driven by the advantage function $A_n(x)$. 

However, the conventional Majority Voting (MV) mechanism defines $r_n(x)$ as a binary reward at the $n$-th evolution iteration, a discrete signal that triggers over-concentration of gradient updates on high frequency paths. Marginal biases in the initial distribution are rapidly polarized through iteration, causing the probability density to map extremely toward high-frequency mode regions. In the limit, the model degenerates into a deterministic mapper \citep{yun_alignment_2025}, stripped of its exploratory capacity. To quantify and mitigate this process, we define the Contrastive Factor $G_n$ denoted as: \begin{equation}
     G_n = \frac{R_{n+1}}{R_n} = \exp(\eta \Delta r_n)
\end{equation}  where $\eta$ is $\displaystyle \frac{1}{\beta}$ and  $R_n = \frac{P_n(x_1)}{P_n(x_2)}$. $x_1$ are majority samples and $x_2$ are long-tail samples. Under the MV mechanism, $\Delta r_{MV} = 1$ causes the contrastive factor to expand exponentially at the maximum rate of $G_{MV} = e^{\eta}$. 

In contrast, our proposed softened frequency reward provides a damping effect by introducing continuity into the probability density. Let $\rho$ denote the proportion of mode answers and $\epsilon$ denote the proportion of long-tail answers. Then:
{\setlength{\abovedisplayskip}{4pt}   % 公式上方距离
\setlength{\belowdisplayskip}{4pt}   % 公式下方距离
\setlength{\abovedisplayshortskip}{0pt}
\setlength{\belowdisplayshortskip}{0pt}
\begin{equation}
G_{SR} = \exp(\eta(\rho - \epsilon)) < G_{MV} = \exp(\eta)
\end{equation}
Since $0 < \rho - \epsilon < 1$, this inequality proves that the SFR mechanism effectively curtails the trend of distributional polarization by reducing the contrastive gain, thereby preserving the necessary logical diversity for the model during the self-evolution process. Full mathematical proofs for the propositions discussed in this section are deferred to Appendix \ref{Proof_of_pro} in detail.

\begin{algorithm}[t] % 去掉星号，改为单栏；[htbp] 增加放置灵活性
\caption{Continuous Softened Retracing reSampling(CSRS)}
\label{alg:csrs}
\SetKwInOut{Require}{Require}

\Require{Dataset $\mathcal{D}$, model $\theta$, retracing rate $\omega$, resample $m$, learning rate $\alpha$, maternal num $n$.}

Initialize optimizer $\mathcal{O}$ with $\theta$\;

\For{each minibatch $\{x\}$ in $\mathcal{D}$}{
    % 采样 maternal 答案和轨迹
    Sample $\mathcal{A}_n, \tau \sim \pi_\theta( \mid x)$\;

    \For{$t = 1$ to $n$}{
        % 缩短公式或将其拆分，防止单栏溢出
        $\tau' \leftarrow \text{Concat}(x_i^p, \text{retrace}(\tau,\omega))$\;
        Visual Perturbation $x_i^i \leftarrow x + \epsilon$\;
        Resample new $\mathcal{A}_m$ from $\tau', x_i^i$\;
    }
    $\mathcal{A}_{all} \leftarrow \mathcal{A}_n \cup \mathcal{A}_m$\;
    \For{$i = 1$ to $n$}{
        $r_{base}^i \leftarrow \mathcal{A}_n(a_i)/\mathcal{A}_{all}$\;
        $r_{fre} \leftarrow \text{SFR}(\mathcal{A}_{m},\mathcal{A}_{all},r_{base}^i)$\;
        $r \leftarrow r_{fre}$\; % 修正了变量名一致性
    }
    
    $b \leftarrow \text{ComputeBaseline}(\cdot)$\;
    $g \leftarrow \nabla_\theta \log \pi_\theta(a_0 \mid x)(r - b)$\;
    $\theta \leftarrow \theta + \alpha \mathcal{O}(g)$\;
}
\Return $\theta$
\end{algorithm}

\subsubsection{Retracing Re-inference Overcomes Sampling Sparsity}

In long-range reasoning tasks, the candidate path space expands exponentially at a rate of $\mathcal{O}(b^L)$ where $L$ is the reasoning length and $b$ is the branching factor such as vocabulary size, rendering correct reasoning paths extremely rare, a phenomenon known as sampling sparsity. Under a finite computational budget, stochastic sampling struggle to cover critical logical nodes. Consequently, early-stage biases are amplified through the recursive process, exacerbating the risk of model collapse.

To address this, we introduce the concept of local stability, which posits that the robustness of a logical path should be calibrated by answer consistency at key decision points. Rather than merely narrowing the search, CSRS implements this by establishing retracing anchor points at intermediate stages to trigger local resampling. This mechanism induces a deliberate distributional divergence from the original trajectory, effectively breaking the cumulative bias of the initial policy. It expands the model's reach into the long-tail sample space, providing a vital opportunity to discover correct but low-probability paths that would otherwise be suppressed by the primary distribution. This approach effectively curtails model collapse and provides more robust gradient guidance for the MLLM self-evolution process.

\begin{figure*}[t]
  \centering
  \includegraphics[width=1.0\linewidth]{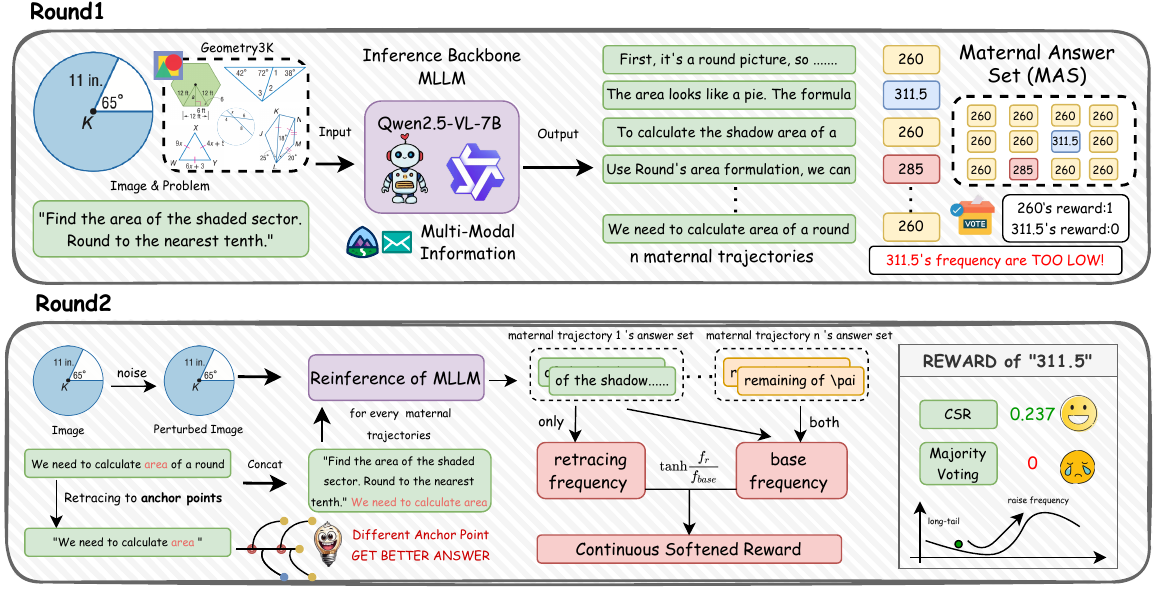} % 调整 width 来控制图片大小
  \caption{\textbf{Pipeline of our method}. (Round1) illustrate initially maternal trajectories and answers generated by MLLM. (Round2) shows three core components of CSRS and improvement to rewards.}
  \label{Method_pipeline}
\end{figure*}

\subsection{Continuously Softened Retracing reSampling}

\subsubsection{Theoretical Assumptions}

To systematically formalize our method, we establish the following fundamental assumptions about mathematical reasoning:

\textbf{Semantic Uniqueness and Consistency of Reasoning.} For a mathematical problem of a given complexity, the ideal reasoning path and its corresponding terminal result should remain semantically unique and consistent. We assume that if a model regenerates its reasoning paths starting from any intermediate step, the underlying mathematical logic and the final conclusion must remain invariant, regardless of linguistic variations or paraphrasing in the expression.

\textbf{Existence of Critical Reasoning Pivots.} We hypothesize that mathematical reasoning is not a uniform linear progression but is governed by critical reasoning pivots, retracing anchor points as human solve a problem. These pivots are key steps that shape the following reasoning and determine whether the final answer is correct.

\subsubsection{Algorithm Details}

To overcome the training instability and model collapse caused by the bias in majority voting during unsupervised self-evolution, we propose the Continuous Softened Retracing reSampling (CSRS). This framework comprises three synergistic components: Retracing Re-inference Mechanism (RRM), Softened Frequency Reward (SFR) and Visual Semantic Perturbation (VSP). The picture of our method pipeline can be seen in Fig. \ref{Method_pipeline}.

\textbf{Retracing Re-inference Mechanism (RRM).} In long-range mathematical reasoning tasks, subtle early-stage logical deviations tend to amplify exponentially as the reasoning steps progress. To capture and rectify these intermediate errors, we introduce a Retracing Re-inference Mechanism (RRM) . Let $\mathcal{D}$ denote a dataset consisting of prompt-image pairs $(X_p, X_i)$. For a specific pair sampled from $\mathcal{D}$, the text prompt $X_p$ is represented as a sequence of $k$ tokens, denoted by $X_p = (x_1, x_2, \dots, x_k)$ and the model generates $n$ initial reasoning paths, designated as maternal trajectories denoted as $\mathcal{A}_n=\{R_{p_1},\cdots,R_{p_n}\}$. Subsequently, a critical logical node termed the retracing anchor point, is identified within each trajectory. For every response trajectory $ R_p=(y_1,y_2,\cdots,y_n)$, we truncate it at this anchor point and concatenate it with the original prompt: 
\begin{equation}
X_p'=\text{Concat}(X_p,(y_1,y_2,\cdots,y_{\omega \text{len}(R_p)}))
\end{equation}

where $\omega \in (0,1)$ denotes the retracing rate, and $y_{\omega \text{len}(R_p)}$ represents the retracing anchor point located at the position $\lfloor \omega \cdot \text{len}(R_{p_n}) \rfloor$ within the initial response sequence. Then we guide the model to perform $m$ local explorations for each prompt, resulting in a total of $mn$ re-inference trajectories denoted as $\mathcal{A}_m=\bigcup_{i=1}^{n}\{R_{p_i}^1,R_{p_i}^2,\cdots,R_{p_i}^m\}$ . This design transforms a broad global search into dense local sampling, ensuring no potential correct steps are missed at key logical forks.

\textbf{Softened Frequency Reward (SFR).} To mitigate the model collapse induced by the binary  hard rewards of conventional majority voting, we design the Softened Frequency Reward (SFR). We first construct a comprehensive set $\mathcal{A}_{all} = \mathcal{A}_n \cup \mathcal{A}_m$ containing all $(m+1)n$ trajectory answers. For any answer $a$ from a maternal trajectory, its base reward $R_{base}$ is defined as its statistical frequency within the entire set:
\begin{equation}
R_{base}(a) = f_{base}(a) =  \frac{\text{Count}(a, \mathcal{A}_{all})}{(m+1)n}
\end{equation}
\vspace{-4pt}
%%%%%

\begin{table*}[t]
\centering
\caption{Comparison with supervised methods. Our unsupervised method CSRS shows competitive performance against base models and current SOTA MM-UPT across various benchmarks.}
\label{tab:comparison}
\small
\setlength{\tabcolsep}{5pt} % 稍微收缩列间距以防溢出
\begin{tabular}{c c l c c c c c >{\columncolor{avgblue}}c}
\toprule
\textbf{Model and Methods} & \textbf{Unsupervised?} & \textbf{Training Data} & \textbf{MathVision} & \textbf{MathVerse} & \textbf{MathVista} & \textbf{WeMath} &  \textbf{Avg} \\
\midrule
Qwen2.5-VL-3B & {\color{red}\ding{55}} & - & 22.47 & 34.54 & 62.30 & 57.53 & 44.21 \\

Qwen2.5-VL-7B & {\color{red}\ding{55}} & - & 25.40 & 44.24 & 66.42  & 67.65 & 50.93 \\

\midrule

MM-UPT & {\color{green}\ding{51}} & Geometry3K & 26.95 & 44.53 & 66.47  & 68.49 & 51.61 \\

MM-UPT  & {\color{green}\ding{51}} & GeoQA & 26.61 & 44.15 & 65.84 & 68.25 & 51.21\\

MM-UPT  & {\color{green}\ding{51}} & MMR1 & 25.98 & 45.12 & 66.27 & 69.14 & 51.63 \\

\midrule
\textbf{CSRS (Ours)} & {\color{green}\ding{51}} & Geometry3K & 27.97 & \textbf{46.01} & 67.81 & \textbf{71.77}  & 53.39 \\

\textbf{CSRS (Ours)}  & {\color{green}\ding{51}} & GeoQA & \textbf{28.95} & 45.82 & \textbf{68.25} & 69.32 & 53.09 \\

\textbf{CSRS (Ours)}  & {\color{green}\ding{51}} & MMR1 & 27.86 & 45.89 & 67.81 & 70.53 & 53.05 \\

\bottomrule
\end{tabular}
\end{table*}

%%%%%

\begin{table*}[t] % 星号表示横跨双栏，[t] 确保排在页面顶部
  \centering
  \small
  
  % --- 第一个子表：Ablation Studies ---
  \begin{minipage}{0.40\textwidth} 
    \centering
    \caption{Sensitivity analysis of $\gamma$ and $\beta$.}
    \label{tab:sfr_sensitivity}
    \setlength{\tabcolsep}{4pt} % 压缩间距以适应 minipage 宽度
    \small % 保持字号一致
    \begin{tabular}{cc|cc} 
      \toprule
      \boldmath$\gamma (\beta=\text{5.0})$  & \textbf{Acc. (\%)} & \boldmath$\beta (\gamma=\text{0.2})$ & \textbf{Acc. (\%)} \\ \midrule
      0.05 & 44.65 & 3.5 & 45.16 \\
      0.1  & 45.12 & 5.0 & \textbf{46.01} \\
      0.2  & \textbf{46.01} & 6.5 & 45.87 \\
      0.35 & 43.78 & 8.0 & 45.72 \\ \bottomrule
    \end{tabular}
  \end{minipage}
  \hfill % 撑开中间间距
  % --- 第二个子表：Rollout Group Size ---
  \begin{minipage}{0.28\textwidth}
    \centering
    \caption{Re-inference Rollout.}
    \label{tab:exp_rollout}
    \begin{tabular}{cc} % 数据全居中
      \toprule
      \textbf{Size ($R$)} & \textbf{MathVista} \\ \midrule
      $R = 4$  & 66.45 \\ 
      $R = 5$  & \textbf{68.25} \\
      $R = 6$  & 67.46 \\
      $R = 7$  & 66.01 \\
      $R = 8$  & 64.98 \\ \bottomrule
    \end{tabular}
  \end{minipage}
  \hfill % 撑开中间间距
  % --- 第三个子表：Retracing Rate ---
  \begin{minipage}{0.30\textwidth}
    \centering
    \caption{Retracing Rates.}
    \label{tab:exp_retracing}
    \begin{tabular}{cc} % 数据全居中
      \toprule
      \textbf{Retracing Rate} & \textbf{MathVista} \\ \midrule
      $\omega$=0.1 & 66.31 \\ 
      $\omega$=0.3 & 67.01 \\
      $\omega$=0.5 & 66.58 \\
      $\omega$=0.7 & \textbf{68.25}\\
      $\omega$=0.9 & 66.43 \\ \bottomrule
    \end{tabular}
  \end{minipage}

  \vspace{-5pt} % 调节表格与下方正文的间距
\end{table*}

%%%%%%

To further validate reasoning stability, we evaluate the frequency variance of the answer between the local re-inference set $\mathcal{A}_{m}$ and the comprehensive set $\mathcal{A}_{all}$. This variance essentially quantifies the sampling distribution divergence, enabling the model to deviate from its original biased distribution and actively explore potential long-tail samples that are often neglected during global sampling. If an answer's frequency increases after local re-inference, it validates the latent robustness of that specific reasoning path. This trend signifies that the path possesses superior conditional consistency despite being a long-tail sample in the model's global distribution,  prompting a hierarchical adjustment accordingly as follows:
\begin{equation}
    R_{final} = (\gamma \text{tanh}(\beta(\frac{f_r}{f_{base} + \epsilon}-1))+1) \times R_{base}
\end{equation} where $\displaystyle f_r = \frac{\text{Count}(a,\mathcal{A}_m)}{mn}$ denotes answer's frequency in re-inference answers set and $\gamma,\beta$ are hyperparameters. By adjusting $\beta$, the reward function becomes highly sensitive to correct but low-frequency paths in the long-tail region. Meanwhile, the saturation property of $\tanh$ caps the reward for high-frequency samples, preventing the model from over-fitting to common errors and effectively mitigating model collapse. SFR encourages exploration in these "initially low-frequency but potentially correct" regions, serveing as a proxy for process-level supervision.

\textbf{Visual Semantic Perturbation.} To prevent the model from cheating by exploiting superficial visual features rather than deep mathematical logic, we introduce Visual Semantic Perturbation (VSP) during the re-inference phase. By applying Gaussian noise $\epsilon \sim \mathcal{N}(0, \sigma^2)$ to the original image $I$:
\begin{equation}
I' = I + \epsilon
\end{equation}
This perturbation forces the model to rely on invariant logical structures, ensuring that the self-evolution process is driven by genuine reasoning rather than visual heuristics in images.

\begin{figure*}[t]
  \centering
  % --- 第一个子图 ---
  \begin{subfigure}{0.48\linewidth}
    \centering
    \includegraphics[width=\linewidth]{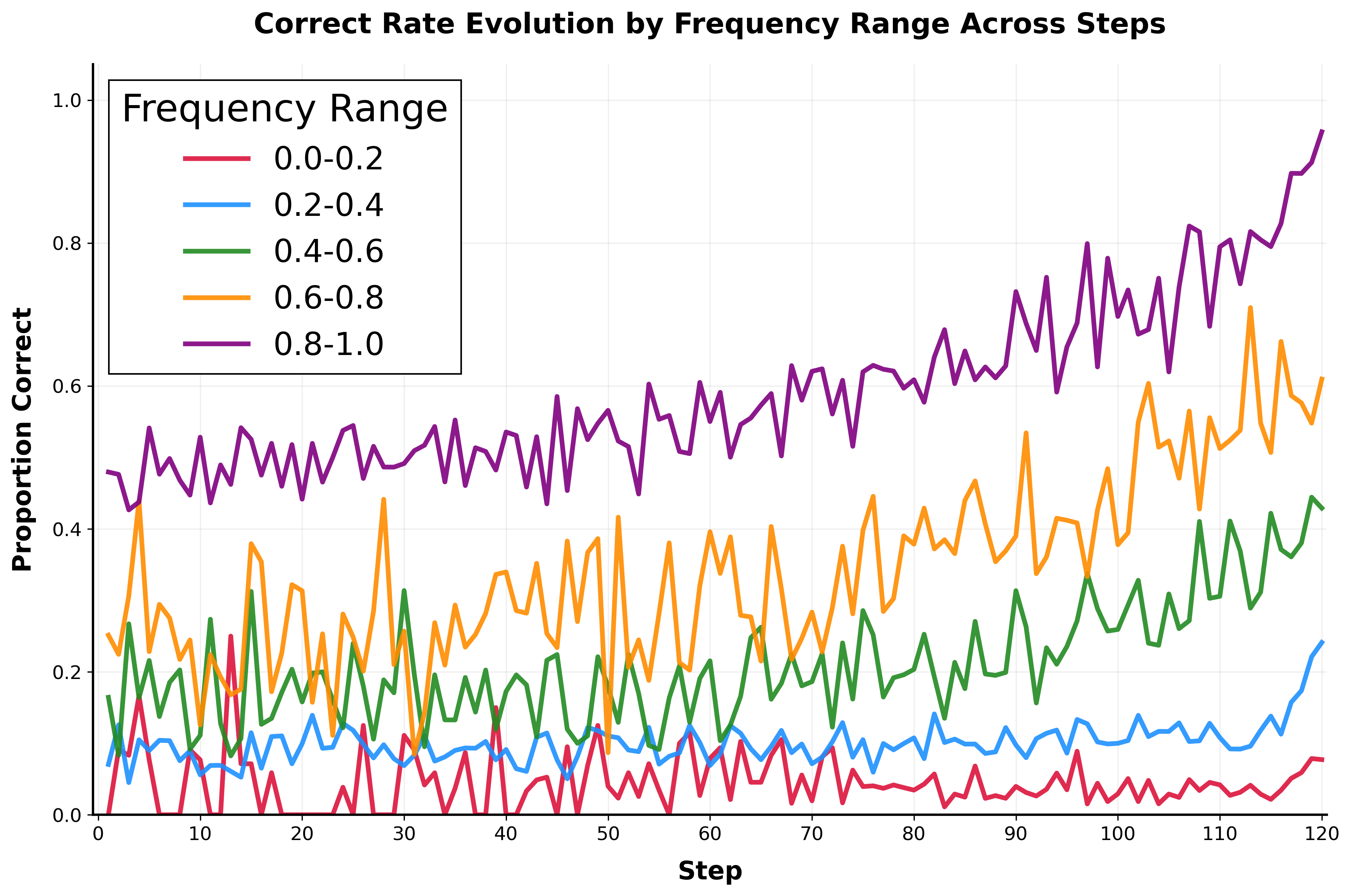}
    \caption{Accuracy of answers within each frequency interval.} % 子图 a 的标题
    \label{fig:sub_correct}
  \end{subfigure}
  \hfill
  % --- 第二个子图 ---
  \begin{subfigure}{0.48\linewidth}
    \centering
    \includegraphics[width=\linewidth]{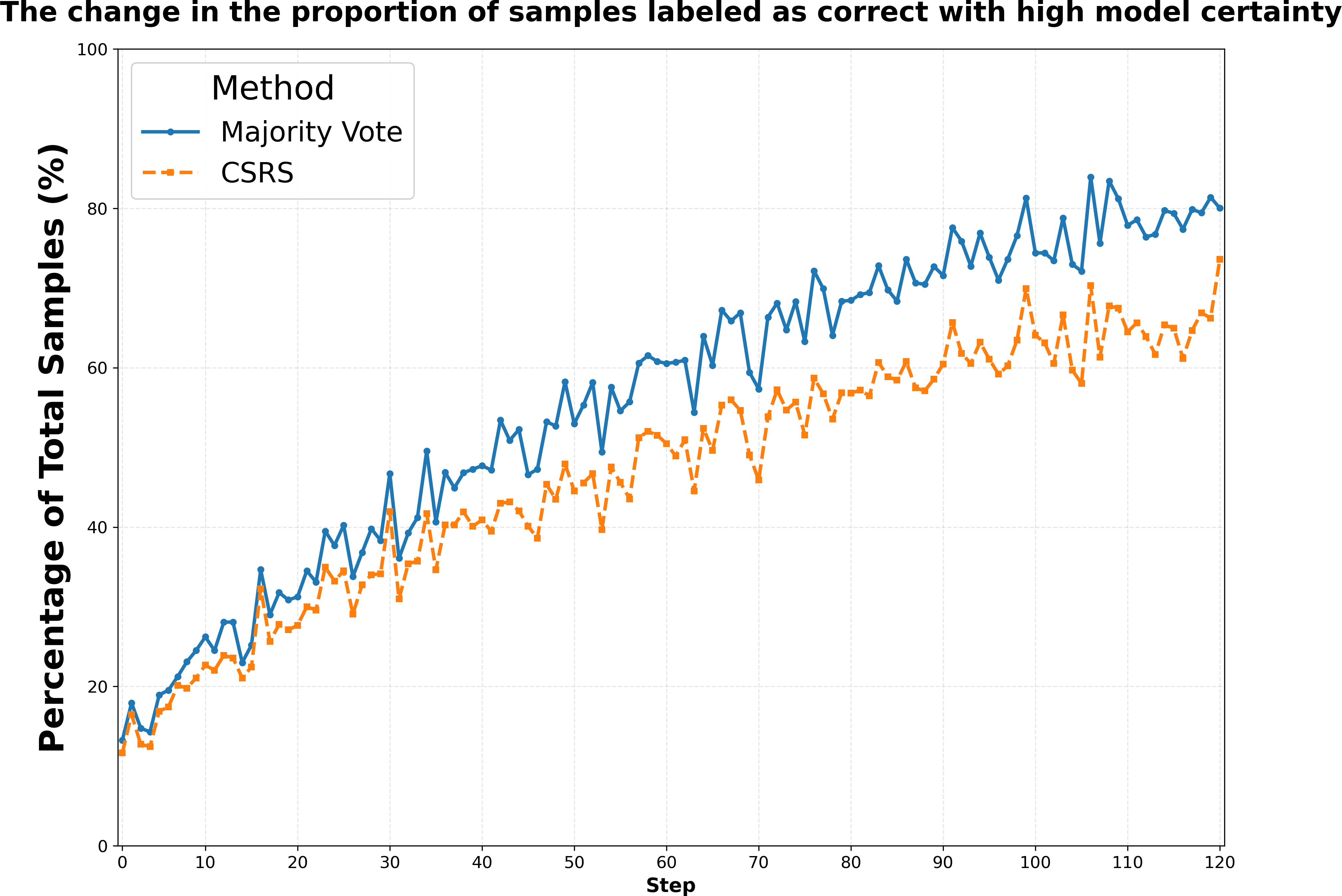}
    \caption{High certainty samples distribution percentage.} % 子图 b 的标题
    \label{fig:sub_dist}
  \end{subfigure}
  
  \caption{\textbf{Visualization of the accuracy and  propotion of high-confidence samples during training}. (a)  Answer accuracy calculated within partitioned frequency intervals, where each sample is uniquely assigned based on its frequency. (b) The evolution of high-confidence (frequence $\in$ $[0.8-1.0]$) samples proportions during training, which serves as a key indicator of model collapse.}
  \label{fig:total_figure}
  
  % 局部微调：缩短与下方正文的距离
  \vspace{-2pt} 
\end{figure*}

\begin{figure*}[t]
  \centering
  % 直接放置图片，设置宽度为页面总宽度的比例（如 0.8 或 0.9）
  \includegraphics[width=0.98\textwidth]{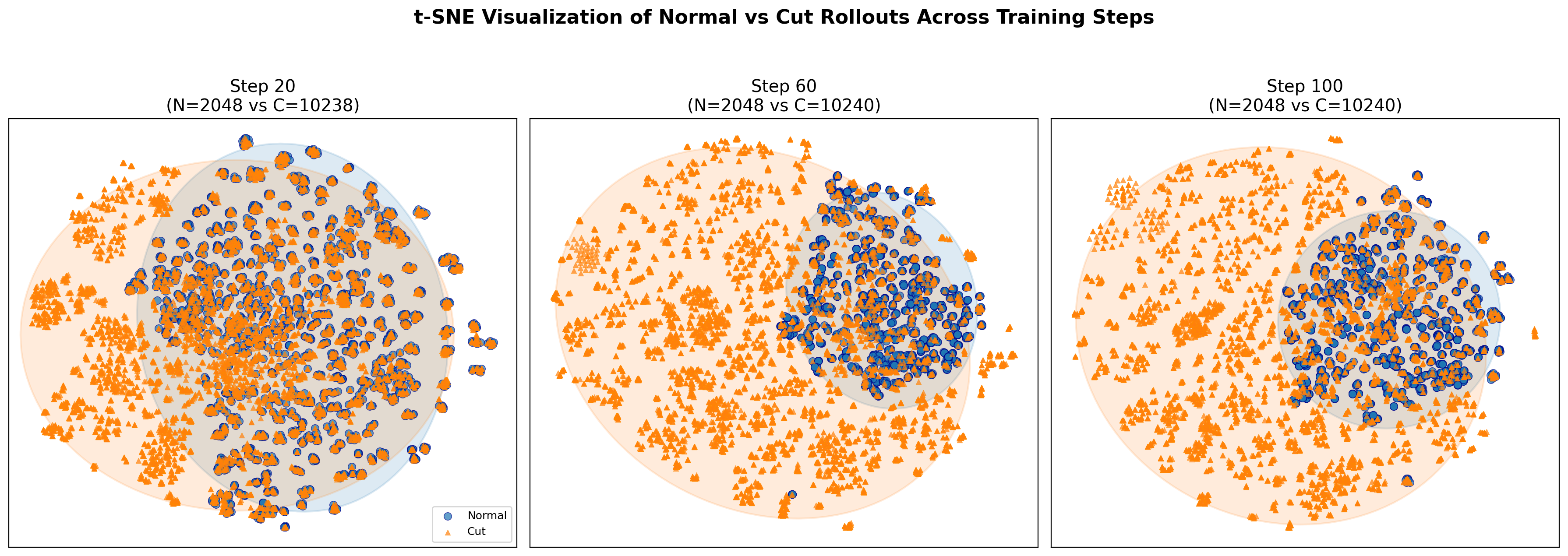}
  
  \caption{\textbf{The change of distribution of Maternal Answers Set and Re-inference Answers Set}. The figure shows different distribution of samples in two sets at the step of 20,60,100. \textbf{Blue dots} represent the semantic space of vanilla responses, while \textbf{orange dots} denote the semantic space of responses generated by CSRS.}
  \label{fig:tsne}
  % 局部微调：如果觉得图片和下方正文太近，可以保留或调整这个值
\end{figure*}

\section{Experiment}

In this section, we conduct extensive experiments to evaluate the performance of CSRS. We begin by detailing the experimental configuration in Section 4.1, covering the benchmarks, evaluation metrics, and specific implementation details. Subsequently, Section 4.2 presents our primary results along with a comprehensive performance analysis across various multimodal reasoning tasks. Finally, we perform ablation and visualization studies in Section 4.3 to evaluate the effectiveness of each component in our method. Specific cases study are clearly presented in Appendix \ref{case_study}.

\subsection{Experimental Setup}

\textbf{Datasets and Benchmarks.} In this study, we focus on the stability of internal signals in unsupervised self-evolution.  We select Geometry3K \citep{lu2021inter}, GeoQA \citep{chen2021geoqa}, MMR1 \citep{leng2025mmr1} as the primary training sets for the self-evolution process. To evaluate the reasoning capabilities of our model, we use four mainstream multimodal scientific reasoning benchmarks: (1) MathVision \citep{lu2021inter} , (2) MathVerse \citep{zhang2024mathverse} , (3) MathVista \citep{lu2023mathvista} , and (4) We-Math \citep{qiao2025we} . These benchmarks provide a comprehensive evaluation ranging from formal geometric problems to complex mathematical reasoning situated in diverse real-world contexts.

\noindent \textbf{Implementation Details.} Our unsupervised post-training pipeline is implemented using the veRL \citep{DBLP:journals/corr/abs-2409-19256} framework, built upon the GRPO algorithm. Specifically, the training is conducted over 15 epoches. We use the AdamW\citep{loshchilov2017decoupled} optimizer with an initial learning rate of $1 \times 10^{-6}$ and a weight decay of 0.01. The KL divergence constraint $\beta$ in GRPO is set to 0.01 to regularize policy updates. We initially generate $n$=8 maternal trajectories for each prompt, followed by $m$=5 local re-inference trajectories starting from the retracing anchor points. Other configurations are detailed in Appendix \ref{config}.

\subsection{Experiment Results}

In this section, we evaluate  \textbf{C}ontinuous \textbf{S}oftened \textbf{R}etracing re\textbf{S}ampling (\textbf{CSRS})} by comparing it with the state-of-the-art MM-UPT \citep{wei_first_2025} baseline and the original Qwen2.5-VL \citep{bai2025qwen2} base models. To ensure a fair and rigorous comparison, both CSRS and MM-UPT are post-trained using the same Qwen2.5-VL-7B  backbone under identical experimental settings. The experimental results on four benchmarks MathVision, MathVerse, MathVista and We-Math are summarized in Table~\ref{tab:comparison}. 

As demonstrated in our results, CSRS achieves consistent performance gains over both the  base models and the MM-UPT baseline. Specifically, when utilizing Qwen2.5-VL-7B as the backbone, the CSRS achieves scores of \textbf{28.95\%}, \textbf{46.01\%},  \textbf{68.25\%}, \textbf{71.77\%} respectively on four benchmarks, outperforming the MM-UPT baseline by \textbf{2.34\%, 1.48\%, 2.41\%, 3.28\%} respectively and  surpassing the performance of the original Qwen2.5-VL-7B . These results suggest that CSRS is effective in complex geometric reasoning tasks, mitigating the model collapse while better exploiting high-quality reasoning paths.

\subsection{Ablation Experiments and Visualization}

\textbf{The Synergistic Effect of RRM, SFR and VSP.}
 As demonstrated in Table \ref{tab:ablation_column_width}, removing either module results in a substantial performance degradation. This indicates that the model can overcome training instability in unsupervised learning when retracing provides diverse exploratory samples in long-tail regions. Furthermore, as illustrated in Fig. \ref{fig:sub_dist}, CSRS shows superior performance while maintaining the proportion of high-confidence samples at a lower level. Fig. \ref{fig:entropy} also demonstrates a slower rate of entropy reduction compared to majority voting. All results indicate that our method results in high accuracy without suffering from the degeneracy typical of self-evolution, where a model becomes overly confident in a set of responses.

\textbf{Effect of Softened Frequency Reward.} Our softened frequency reward plays a significant role in CSRS. Crucially, as illustrated in Fig. \ref{fig:sub_correct}, the proportion of correct samples across all frequency ranges exhibits a consistent upward trend as training progresses. This empirically validates our theoretical derivation in Section \ref{math_proof}. 

\textbf{Effect of Visual Semantic Perturbation.} Finally, we evaluate the impact of visual perturbation as shown in Table \ref{tab:ablation_column_width}. Removing this perturbation leads to performance decay across all three benchmarks. This shows that visual perturbation prevents the model from relying on superficial visual features, compelling it to prioritize deep mathematical logic and thus enhancing the overall robustness of the reasoning process.

\textbf{Ablation of hyperparameters.} As shown in Table \ref{tab:exp_rollout}, the performance improves as the number of re-inference rollouts increases, reaching its peak at 5. While increasing rollouts initially facilitates the exploration of diverse information around anchored trajectories, exceeding this threshold leads to cumulative errors and introduces excessive computational overhead.

Regarding the retracing rate shown in Table \ref{tab:exp_retracing}, we observe a fluctuating yet upward trend, with performance peaking at 0.7. An excessively low retracing rate causes the search space to collapse back into the exponentially vast domain characteristic of traditional majority voting. Conversely, a high retracing rate implies the reasoning path is already functionally deterministic, offering negligible room for further refinement.

The sensitivity analysis of $\gamma$ and $\beta$ is presented in Table \ref{tab:sfr_sensitivity}. We observe that the model achieves optimal accuracy when $\gamma=0.2$ and $\beta=5.0$. The results demonstrate that small deviations result in only minor performance fluctuations, confirming the algorithmic robustness of our SFR against hyperparameter variations.

\textbf{Visualization of Maternal Answers Set and Re-inference Answer Set.} We use all-MiniLM-L6-v2 \citep{sentence_transformers_2024} to perform a semantic space analysis on the sets of maternal trajectories and re-inference trajectories during the training process. The corresponding t-SNE visualizations at steps 20, 60, and 100 are provided in Fig. \ref{fig:tsne}. It is observed that the semantic scope of the maternal trajectories progressively contracts into a relatively confined space. However, CSRS mitigates early-stage bias and maintains a substantially larger search space around anchor points,  facilitating the exploration of long-tail samples and alleviating the risk of model collapse.

\textbf{Other Model Family, Stability and Anchors Strategy.} To further validate the robustness of CSRS, we conducted additional experiments comparing stochastic versus fixed retracing anchors, evaluating performance across diverse model families, and assessing evolutionary stability. Detailed results for these analyses are provided in Appendix \ref{model_family}, \ref{convergence}, \ref{strategy}. The experimental evidence consistently demonstrates that our proposed strategies are optimal, exhibiting superior robustness and stability across various architectures and self-evolution cycles on geometric reasoning tasks.

\section{Conclusion}

This paper presents Continuous Softened Retracing reSampling (CSRS) designed to address the challenge of model collapse on math tasks in the unsupervised self-evolution of  MLLMs. CSRS introduces a retracing re-inference mechanism to effectively explore model’s coverage of the logical long-tail distribution samples. It implements softened frequency reward and visual perturbation dynamically calibrate the frequency variance across sampling distributions, steering the model away from over-fitting to high-frequency biases.
Empirical results demonstrate that CSRS consistently outperforms contemporary unsupervised methods across multiple multimodal benchmarks.

\section*{Limitations}

While CSRS demonstrates significant potential in the field of unsupervised MLLM self-evolution, we acknowledge several limitations that provide promising directions for future research:

Our current evaluation and methodology are primarily focused on mathematical reasoning tasks. It remains to be fully explored about broader, open-ended general scene understanding. Extending CSRS to diverse multimodal tasks is a key objective for our future work.

Looking ahead, we aim to further investigate the evolution of intrinsic reward mechanisms from static logical verification into dynamic, self-reflective evaluation, which would facilitate preciser guidance of reasoning quality in scenarios.

% Bibliography entries for the entire Anthology, followed by custom entries
%\bibliography{custom,anthology-overleaf-1,anthology-overleaf-2}

% Custom bibliography entries only
\bibliography{custom}

\appendix

\section{Appendix}
\label{sec:appendix}

\subsection{Proof of Section 3.1}
\label{Proof_of_pro}

To theoretically justify how \textbf{Continuous Softened Retracing reSampling (CSRS)} alleviates model collapse, we analyze the evolutionary dynamics of the policy under different reward mechanisms.

\begin{proposition}[Self-evolution Closed-form Solution]
In the GRPO optimization objective, adopting majority voting leads to distribution collapse, and its ideal closed-form solution \(P(x)\) and the reference distribution \(P_{\text{ref}}(x)\) satisfy the following relation:
\[
P(x) = \frac{P_{\text{ref}}(x) e^{A_n(x)/\beta}}{Z}
\]
where \(Z\) is the normalization constant.
\end{proposition}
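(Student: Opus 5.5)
The plan is to treat the single-step GRPO objective as a constrained variational problem over the probability simplex and solve it with the Lagrangian already written in \eqref{eq:lagrangian}. Fix the iteration index $n$, treat the advantage $A_n(x)=r_n(x)-\bar r_n$ as a fixed function of the output $x$ (this is the idealization implicit in Section~\ref{math_proof}: the group baseline and the majority-vote pseudo-label are frozen during the update), and regard $P$ as a vector $\{P(x)\}_x$ with $P(x)\ge 0$ and $\sum_x P(x)=1$. The objective
\[
J(P)=\sum_x P(x)A_n(x)-\beta\sum_x P(x)\log\frac{P(x)}{P_{\text{ref}}(x)}
\]
is strictly concave for $\beta>0$: it is a linear term minus $\beta$ times a strictly convex relative-entropy term. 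Hence any stationary point of the Lagrangian in the interior of the simplex is the unique global maximizer.

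The computation proceeds in three steps. First, I differentiate $L(P)$ with respect to $P(x)$ to get $A_n(x)-\beta\bigl(\log\frac{P(x)}{P_{\text{ref}}(x)}+1\bigr)+\lambda=0$. Second, I solve this for $P(x)$, which gives $P(x)=P_{\text{ref}}(x)\exp\bigl(A_n(x)/\beta\bigr)\exp\bigl((\lambda-\beta)/\beta\bigr)$. Third, I impose $\sum_x P(x)=1$, which fixes the constant factor as $1/Z$ with $Z=\sum_x P_{\text{ref}}(x)e^{A_n(x)/\beta}$. This is exactly the claimed relation. Two checks are needed along the way. $Z$ is finite and positive because $P_{\text{ref}}$ is a probability distribution and $A_n$ is bounded: majority-vote rewards lie in $\{0,1\}$, so $|A_n|\le 1$. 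The solution is automatically strictly positive wherever $P_{\text{ref}}(x)>0$, so the inequality constraints are inactive and no KKT multipliers for them are needed. Points with $P_{\text{ref}}(x)=0$ are forced to $P(x)=0$ by the infinite KL penalty.

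Finally, I connect this to the recursion $P_{n+1}=P_ne^{A_n/\beta}/Z$ used in Section~\ref{math_proof}. Taking $P_{\text{ref}}=P_n$ at each iteration and iterating gives $P_N(x)\propto P_0(x)\exp\bigl(\sum_{k<N}A_k(x)/\beta\bigr)$. To justify the ``leads to distribution collapse'' clause, I take a majority sample $x_1$ and a long-tail sample $x_2$ with $A_k(x_1)-A_k(x_2)=1$ under binary voting. The ratio $P_N(x_1)/P_N(x_2)$ then grows as $e^{N/\beta}$, so the mass concentrates on the mode, which is the contrastive factor $G_{MV}$.

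I expect the main obstacle to be rigor rather than calculation. Under majority voting the advantage actually depends on $P$ through the sampled group, so the closed form holds only under the frozen-advantage idealization. I would state that assumption explicitly rather than try to handle the fixed-point version. A second subtlety is arguing that the mode stays the majority across iterations, so that $\Delta r=1$ persists. I would handle this by noting that the update monotonically increases the mode's relative mass, so its majority status is self-reinforcing.
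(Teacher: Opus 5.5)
Your proposal is correct and follows essentially the same route as the paper: the Lagrangian with a normalization multiplier, the stationarity condition $A_n(x)-\beta\bigl(\log\frac{P(x)}{P_{\text{ref}}(x)}+1\bigr)+\lambda=0$, exponentiation, and absorbing the constant into $Z$. Your additions make rigorous what the paper only asserts informally: strict concavity for global optimality, the explicit frozen-advantage and $P_{\text{ref}}=P_n$ assumptions, and the $e^{N/\beta}$ ratio growth for the collapse clause, which anticipates the contrast-factor argument of Proposition 2.
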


\begin{proof}[Proof]
According to the Lagrange multiplier method, we construct the following functional:
Under the GRPO optimization objective, the ideal closed-form solution is given by:
\[
\max_{P} \, \mathbb{E}_{x \sim P}[A_n(x)]-\beta \, \mathbb{D}_{\text{KL}}\bigl(P \parallel P_{\text{ref}}\bigr)
\]
where \(P\) denotes the optimal distribution to be found. Applying the Lagrange multiplier method yields:
\[
\begin{aligned}
\mathcal{L}(P) = & \sum_x P(x)A_n(x) - \beta \sum_x P(x) \log \frac{P(x)}{P_{\text{ref}}(x)} \\
& + \lambda \Bigl( \sum_x P(x) - 1 \Bigr)
\end{aligned}
\]
with \(\lambda\) being the Lagrange multiplier. Differentiating with respect to \(P(x)\) gives:
\[
\frac{\partial \mathcal{L}}{\partial P(x)} = A_n(x)-\beta\Bigl(\log\frac{P(x)}{P_{\text{ref}}(x)}+1\Bigr)+\lambda = 0 .
\]
Solving this yields
\[
P(x)=P_{\text{ref}}(x)e^{\frac{A_n(x)}{\beta}+C}= \frac{P_{\text{ref}}(x)e^{\frac{A_n(x)}{\beta}}}{Z}.
\]
Consequently, for a MLLM that uses majority voting for self‑evolution, its gradient dynamics follow the equation above, where
\[
A_n(x)=\mathbf{1}(x\in M_n)-\rho_n,
\]
where \(M_n\) the set of majority votes and \(\rho_n\) the average proportion of the majority.  
When the majority voting mechanism is used, the value of \(A_n(x)\) at majority points is large, so that \(e^{A_n(x)/\beta}\) is dominated by \(A_n(x)\) (here \(\beta\) is a constant). Although the KL divergence term can be partly constrained by \(\beta\), the self evolution trend still collapses towards the majority region or areas of high probability density in the model’s own distribution. In the limit, this distribution collapse degenerates into an extreme, fixed mapping.
\end{proof}

\begin{proposition}[CSRS Can Relieve Model Collapse]
When a reinforcement learning algorithm uses GRPO, softening the frequency reward compared with the 0–1 reward of majority voting alleviates the distribution‑collapse phenomenon in LLM self‑evolution.
\end{proposition}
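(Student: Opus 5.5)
We will build on the closed-form solution from the preceding Proposition (Self-evolution Closed-form Solution). Iterating it with $P_{\text{ref}}=P_n$ gives the one-step update $P_{n+1}(x)=P_n(x)e^{A_n(x)/\beta}/Z_n$. Our plan is to turn ``alleviates collapse'' into a comparison of how fast the odds ratio between a majority answer $x_1$ and a long-tail answer $x_2$ grows. For that ratio $R_n=P_n(x_1)/P_n(x_2)$, the normalizer $Z_n$ and the group baseline $\bar r$ cancel, since $A_n(x)=r_n(x)-\bar r_n$. We are left with
\[
R_{n+1}=R_n\exp\bigl(\eta\,(r_n(x_1)-r_n(x_2))\bigr),\qquad \eta=1/\beta,
\]
so the Contrastive Factor is $G_n=\exp(\eta\Delta r_n)$, and after $N$ iterations $R_N=R_0\prod_{n<N}G_n$. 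Collapse is the statement $R_N\to\infty$, i.e.\ $P_N(x_2)\to 0$. The claim then reduces to showing that the per-step gap $\Delta r_n$ is strictly smaller under SFR than under MV, with the consequence quantified by the product of the $G_n$.

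First we handle majority voting. There $r(x_1)=1$ and $r(x_2)=0$ at every iteration, so $\Delta r_{MV}=1$ and $G_{MV}=e^{\eta}$. This gives $R_N=R_0e^{N\eta}$, which diverges geometrically at the maximal rate. Next we handle the softened frequency reward. For a maternal answer $a$ we have $R_{\text{final}}(a)=\bigl(1+\gamma\tanh(\cdot)\bigr)f_{\text{base}}(a)$ with $f_{\text{base}}(x_1)=\rho$ and $f_{\text{base}}(x_2)=\epsilon$. Since $|\tanh|<1$, each reward lies in $\bigl((1-\gamma)f_{\text{base}},(1+\gamma)f_{\text{base}}\bigr)$. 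In the unmodulated case the gap is $\rho-\epsilon$. This lies strictly in $(0,1)$ because frequencies lie in $[0,1]$ and the two answers are distinct with $\epsilon>0$ (or $\rho<1$) whenever a long-tail answer survives. Hence $G_{SR}=e^{\eta(\rho-\epsilon)}<e^{\eta}=G_{MV}$.

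To keep the inequality valid with the $\tanh$ modulation, we bound $\Delta r\le(1+\gamma)\rho-(1-\gamma)\epsilon$ and require this to be below $1$. That holds as soon as $\gamma<(1-\rho+\epsilon)/(1+\rho-\epsilon)$, and in particular for the small $\gamma$ (for example $0.2$) used in the paper. Moreover, when retracing raises the long-tail frequency ($f_r>f_{\text{base}}$ for $x_2$), the modulation pushes $r(x_2)$ up. In the same regime the saturation of $\tanh$ caps $r(x_1)$, which shrinks $\Delta r$ further.

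Finally, we add a self-damping observation. As $P_n$ concentrates, $\rho_n$ increases but $\epsilon_n$ shrinks, so the gap $\rho_n-\epsilon_n$ stays below $1$. In particular, when the answer distribution is nearly balanced the gap is near $0$ and $G_{SR}\approx 1$. Comparing products then gives $R_N^{SR}/R_N^{MV}=\prod_n e^{-\eta(1-\Delta r_n^{SR})}\to 0$, i.e.\ polarization is strictly slower.

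The main obstacle is that the rewards are empirical frequencies that themselves depend on $P_n$. If $\rho_n\to1$ and $\epsilon_n\to0$, then $G_{SR}\to G_{MV}$, and the statement could only be ``slower,'' not ``prevented.'' We would first try to show that, under the KL term, $\sup_n(\rho_n-\epsilon_n)\le c<1$ along the trajectory, using the fact that retracing keeps $\epsilon_n$ bounded below. If that fails, we would settle for the strict per-step and cumulative rate comparison above.
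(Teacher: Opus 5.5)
Your core argument is the paper's proof. The normalizer and the group baseline cancel in $R_n=P_n(x_1)/P_n(x_2)$, so $G_n=\exp(\eta\Delta r_n)$, and comparing $\Delta r_{MV}=1$ with $\Delta r=\rho-\epsilon\in(0,1)$ gives $G_{SR}<G_{MV}$. That per-step comparison is all the paper proves, and it suffices for the statement read as ``collapse is slowed.'' The paper's computation $\bar r=\sum_y P_n^2(y)$ is not needed, since, as you noticed, the baseline drops out of the ratio. However, two of your additions are not correct as written.

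\textbf{The cumulative limit.} The claim $R_N^{SR}/R_N^{MV}=\exp\bigl(-\eta\sum_{n<N}(1-\Delta r_n^{SR})\bigr)\to0$ needs $\sum_n(1-\Delta r_n^{SR})=\infty$. This fails in exactly the collapse regime.

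Take the paper's own idealization $r(x)=P_n(x)$ with two answers. Then $R_n=p_n/(1-p_n)$ and $1-\Delta r_n=2(1-p_n)=2/(1+R_n)$. If $R_0>1$, then $R_n$ is increasing and $\Delta r_n\ge(R_0-1)/(R_0+1)>0$ for all $n$. So $R_n$ grows geometrically and the series converges. The ratio therefore tends to a positive constant: SFR delays polarization by a bounded number of steps but keeps the asymptotic rate $e^{\eta}$.

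You should keep only the strict inequality $R_N^{SR}<R_N^{MV}$ for every $N$, starting from the same $R_0$. The stronger conclusion would need your hoped-for bound $\sup_n(\rho_n-\epsilon_n)\le c<1$, which neither your argument nor the paper supplies.

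\textbf{The $\tanh$ step.} The inequality $(1+\gamma)\rho-(1-\gamma)\epsilon<1$ is equivalent to $\gamma<(1-\rho+\epsilon)/(\rho+\epsilon)$. Your denominator $1+\rho-\epsilon$ gives only a sufficient condition. Neither version covers $\gamma=0.2$ unconditionally, because both thresholds tend to $0$ as $\rho\to1$, $\epsilon\to0$.

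A uniform fix uses the shape of the reward. Since $f_r\le1$, the ratio inside $\tanh$ for $x_1$ is at most $1/\rho$; a nonpositive argument is trivial. For a positive argument, $\tanh z<z$ gives
\[
r(x_1)<\rho+\gamma\beta_{\mathrm{SFR}}(1-\rho)\le1\quad\text{whenever }\gamma\beta_{\mathrm{SFR}}\le1,
\]
where $\beta_{\mathrm{SFR}}$ is the $\tanh$ slope, not the KL weight. This holds for the paper's $\gamma=0.2$, $\beta_{\mathrm{SFR}}=5$. Combined with $r(x_2)\ge(1-\gamma)\epsilon>0$, it yields $\Delta r<1$ for all $\rho,\epsilon$.
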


\begin{proof}[Proof]
We have \(A_n(x)=r(x)-\bar{r}\). For the softened‑frequency reward,
\[
A_{\text{sf}}(x)=P_n(x)-\bar{P_n}.
\]
Since
\[
\begin{aligned}
\mathbb{E}[\bar{r}] &= \mathbb{E}_{x \sim P_n}[r(x)] \\
&= \sum_{y \in P_n} P_n(y)r(y) \\
&= \sum_{y \in P_n} P_n(y) P_n(y) \\
&= \sum_{y \in P_n} P_n^2(y)
\end{aligned}
\]
it follows that
\[
A_{\text{sf}}(x)=P_n(x)-\sum_{y\in P_n} P_n^2(y).
\]
According to the gradient dynamic expression in \textbf{Proposition 1}, the update with the softened‑frequency reward reads
\[
P_{n+1}(x)=P_n(x)\,
   \exp\!\Bigl(\frac{\eta\bigl(P_n(x)-\sum_{y\in P_n}P_n^2(y)\bigr)}{Z}\Bigr).
\]

Let \(x_{\text{tail}}\) denote a long tail event in the self‑evolution.  
For majority voting, \(A_{\text{MV}}=0-\rho=-\rho\), which gives
\[
P_{n+1}(x)=P_n(x)e^{-\eta\rho_n}.
\]
For the softened frequency reward, writing \(P_n(x_{\text{tail}})=\epsilon\), we obtain
\[
P_{n+1}(x_{\text{tail}})=P_n(x_{\text{tail}})\,
   \frac{e^{\eta\epsilon}}{Z\,e^{\eta\bar{P_n}}}.
\]

Define a contrast operator: let \(x_1\) be majority samples and \(x_2\) long‑tail samples. The contrast at the \(n\)-th iteration is
\[
R_n = \frac{P_n(x_1)}{P_n(x_2)}.
\]
The contrast at the next generation is
\[
R_{n+1}=R_n\,
   \exp\!\bigl(\eta\,[r_n(x_1)-r_n(x_2)]\bigr).
\]
Define the contrast factor as
\[
G_n = \frac{R_{n+1}}{R_n}= \exp\!\bigl(\eta\,\Delta r_n\bigr),
\]
where
\[
\Delta r_n = 
\begin{cases} 
1,                     & \text{if } x \in \text{Distri(MV)}, \\[2pt]
\rho - \epsilon,       & \text{if } x \in \text{Distri(SF)}.
\end{cases}
\]
Because \(0<\rho-\epsilon<1\), we have \(G_{\text{sf}} < G_{\text{mv}}\). Hence the softened‑frequency reward slows down the rate of distribution collapse.
\end{proof}

\subsection{Implementation Details}

\subsubsection{Compute Resources}
We conduct our experiments using NVIDIA  A800-80G GPUs. The experimental time using 8 A800 for training Qwen2.5-VL-7B \citep{bai2025qwen2} on the Geometry3K \citep{wang2024measuring},GeoQA \citep{chen2021geoqa}, MMR1 \citep{leng2025mmr1} dataset using GRPO is around 36 hours.

\subsubsection{Entropy Change}

\begin{figure}[h]
    \centering
    \includegraphics[width=0.35\textwidth]{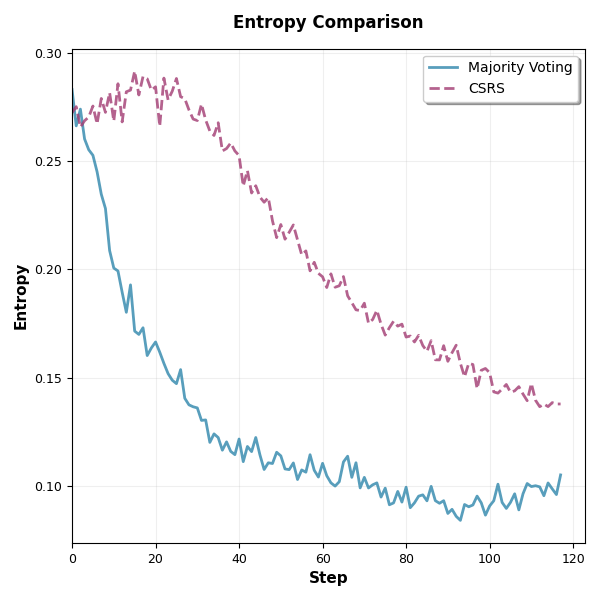}
    \caption{Entropy Change during training}
    \label{fig:entropy}
\end{figure}

We further compare the evolution of policy entropy between our proposed CSRS and the Majority Voting baseline (MM-UPT) throughout the training process, as illustrated in Fig. \ref{fig:entropy}. It is evident that the entropy in CSRS exhibits a slower and more gradual decay compared to MM-UPT. Unlike the binary reward structure of majority voting, CSRS maintains higher training stability and effectively mitigates model collapse. This gentler transition allows the model to have more access to long-tail samples, thereby facilitating the acquisition of more diverse and informative learning signals.

\subsubsection{Hyperparameters}
\label{config}

Our complete hyperparameters and system configurations are shown in Table \ref{tab:all_hyperparams}.

\subsubsection{Modules Ablation Studies Table}
Due to space constraints, the detailed results for the module ablation study are presented in Table \ref{tab:ablation_column_width}.

\begin{table}[t]
  \centering
  \small
  \caption{Ablation studies of different modules. The results demonstrate the incremental contribution of each component to the overall performance.}
  \label{tab:ablation_column_width}
  \begin{tabular*}{\columnwidth}{@{\extracolsep{\fill}}lcccc}
    \toprule
    \textbf{Module} & \textbf{M-Vis.} & \textbf{M-Ver.} & \textbf{M-Vist.} & \textbf{WeMath} \\ 
    \midrule
    MajorityVote              & 25.40 & 44.24 & 66.42 & 67.65 \\ % 保持 Part 1 的 Baseline 数据一致
    +SFR            & 26.52 & 45.10 & 67.15 & 69.42 \\ % SFR 提升明显
    +RRM            & 26.85 & 45.35 & 67.38 & 69.85 \\ % RRM 提升明显
    +VSP            & 25.55 & 44.35 & 66.55 & 67.82 \\ % VSP 提升较小
    +SFR+RRM        & 27.68 & 45.85 & 67.62 & 71.30 \\ % 组合效果接近最终值
    +SFR+VSP        & 26.70 & 45.22 & 67.28 & 69.65 \\ 
    +RRM+VSP        & 27.02 & 45.48 & 67.50 & 70.12 \\ 
    \midrule
    \textbf{CSRS}   & \textbf{27.97} & \textbf{46.01} & \textbf{67.81} & \textbf{71.77} \\ % 保持最终数据
    \bottomrule
  \end{tabular*}
\end{table}

\begin{table}[h]
\centering
\small
\caption{Complete Hyperparameter and System Configurations.}
\label{tab:all_hyperparams}
\begin{tabular*}{\columnwidth}{@{\extracolsep{\fill}}ll@{}}
\toprule
\textbf{Category} & \textbf{Hyperparameter (Value)} \\ \midrule

% --- 第一部分 ---
\rowcolor{lightgray} \multicolumn{2}{l}{\textbf{Data Configuration}} \\
Train Batch Size & 256 \\
Max Prompt Length & 1024 \\
Max Response Length & 1536 \\
Filter Overlong Prompts & True \\
Truncation Strategy & "error" \\
Image Key & "images" \\ \addlinespace[0.5em]

% --- 第二部分 ---
\rowcolor{lightgray} \multicolumn{2}{l}{\textbf{Model \& Optimization}} \\
Base Model & Qwen2.5-VL-7B-Instruct \\
Optimizer Learning Rate & 1e-6 \\
KL Loss Coefficient ($\beta$) & 0.01 \\
KL Loss Type & Low-Var KL \\
Entropy Coefficient & 0 \\
Gradient Checkpointing & Enabled \\
Remove Padding / Fused Kernels & Enabled \\ \addlinespace[0.5em]

% --- 第三部分 ---
\rowcolor{lightgray} \multicolumn{2}{l}{\textbf{PPO / GRPO Settings}} \\
Algorithm & GRPO \\
PPO Mini Batch Size & 64 \\
PPO Micro Batch Size (per GPU) & 8 \\
Rollout Group Size ($G$) & 8 \\
Ref Log-Prob Micro Batch (GPU) & 4 \\
Param Offload (Ref Model) & True \\ \addlinespace[0.5em]

% --- 第四部分 ---
\rowcolor{lightgray} \multicolumn{2}{l}{\textbf{Sampling \& Retracing (CSRS)}} \\
Rollout Engine & vLLM \\
Sampling Group Size ($n$) & 8 \\
Re-inference Number & 5 \\
Retracing Rate & 0.7 \\
GPU Memory Utilization & 0.6 \\ \addlinespace[0.5em]

% --- 第五部分 ---
\rowcolor{lightgray} \multicolumn{2}{l}{\textbf{Trainer Settings}} \\
Total Epochs & 15 \\
GPUs per Node & 8 \\
Number of Nodes & 1 \\
Test Frequency & 5 epochs \\
Save Frequency & 20 steps \\ \bottomrule
\end{tabular*}
\end{table}

\subsection{Model Families Experiments}
\label{model_family}
In multimodal self-evolution tasks, the ability of a framework to generalize across different model families is a key indicator of its algorithmic robustness. We expand our experiments to include two additional prominent MLLM families: InternVL3-8B and ThinkLite-VL-7B. As shown in the Table \ref{tab:comprehensive_results} below, the results prove that CSRS consistently delivers significant performance gains across different model architectures and parameter scales.

\begin{table*}[t]
  \centering
  \footnotesize % 1. 缩小字体，显得更精致专业
  \caption{Comprehensive performance comparison and ablation studies after integrating CSRS.}
  \label{tab:comprehensive_results}
  \renewcommand{\arraystretch}{1.1} % 稍微增加行高，防止文字太挤
  \setlength{\tabcolsep}{12pt}      % 2. 增加列间距，利用双栏宽度
  
  \begin{tabular}{lcccccc} % 增加到 6 列
    \toprule
    \textbf{Model / Datasets} & \textbf{MathVision} & \textbf{MathVerse} & \textbf{MathVista} & \textbf{WeMath} & \textbf{Avg.} \\ 
    \midrule
    
    % --- 第一组：不同基座模型 ---
    \multicolumn{6}{l}{\textit{Part 1: Performance of  Base Models and Trained Models in \textbf{Geometry3k}}} \\ \midrule
    Qwen2.5-VL-7B(Vanilla)    & 25.40 & 44.24 & 66.42 & 67.65 & 50.93 \\
    +CSRS            & 27.97 & 46.01 & 67.81 & 71.77 & \textbf{53.39} \\ \cmidrule(lr){1-6}
    InternVL3-8B(Vanilla)     & 29.30 & 45.12 & 71.60 & 73.25 & 54.82 \\
    +CSRS            & 30.58 & 47.23 & 72.98 & 75.15 & \textbf{56.49} \\ \cmidrule(lr){1-6}
    ThinkLite-VL-7B(Vanilla)  & 25.92 & 46.01 & 68.12 & 67.12 & 51.79 \\
    +CSRS            & 26.98 & 48.25 & 69.76 & 69.32 & \textbf{53.58} \\ \midrule
    
    % --- 第二组：Rollout Size ---
    \multicolumn{6}{l}{\textit{Part 2: Performance of  Base Models and Trained Models in \textbf{GeoQA}}} \\ \midrule
    Qwen2.5-VL-7B+CSRS           & 28.95 & 45.82 & 68.25 & 69.32 & 53.09 \\ 
    InternVL3-8B+CSRS            & 31.82 & 47.50 & 73.40 & 75.80 & \textbf{57.13} \\ 
    ThinkLite-VL-7B+CSRS            & 28.15 & 48.35 & 70.22 & 69.45 & \textbf{54.04} \\ \midrule
    
    % --- 第三组：Retracing Rate ---
    \multicolumn{6}{l}{\textit{Part 3: Performance of  Base Models and Trained Models in \textbf{MMR1}}} \\ \midrule
    Qwen2.5-VL-7B+CSRS            & 27.86 & 45.89 & 67.81 & 70.53 & 53.05 \\ 
    InternVL3-8B+CSRS            & 30.95 & 46.42 & 72.15 & 75.20 & \textbf{56.18} \\ 
    ThinkLite-VL-7B+CSRS            & 27.92 & 47.95 & 69.80 & 70.15 & \textbf{53.96} \\ 
    \bottomrule
  \end{tabular}
\end{table*}

\subsection{Convergence and Stablity Analysis}
\label{convergence}

Since the ground-truth labels are inaccessible in self-evolution tasks, the model faces the risk of converging to a local optimum. Consequently, we conducted additional long-term evolution experiments to verify whether performance degrades as training duration increases.  We evaluated the model every 5 training steps on Geometry3K. As illustrated in Table \ref{tab:training_steps}, CSRS maintains steady performance without the significant degradation or label contamination. This proves our framework's robustness in maintaining label purity and training stability during long-term self-evolution.

% --- Table 7: Training Steps ---
\begin{table}[t]
  \centering
  \small % 统一字号
  \caption{Performance across different training steps.}
  \label{tab:training_steps}
  \setlength{\tabcolsep}{4.5pt} % 针对 6 列稍微收紧间距
  \begin{tabular}{lccccc}
    \toprule
    \textbf{Steps} & \textbf{M-Vis.} & \textbf{M-Ver.} & \textbf{M-Vist.} & \textbf{WeMath} & \textbf{Avg.} \\ 
    \midrule
    120 & 27.97 & 46.01 & 67.81 & 71.77 & 53.39 \\
    125 & 27.52 & 47.24 & 68.06 & 71.64 & 53.62 \\
    130 & 28.03 & 46.18 & 67.25 & 70.93 & 53.10 \\
    135 & 28.01 & 46.75 & 67.99 & 71.62 & 53.59 \\ 
    140 & 28.25 & 47.34 & 67.63 & 70.89 & 53.53 \\ 
    \bottomrule
  \end{tabular}
\end{table}

\subsection{Random Re-inference VS Fixed Re-inference}
\label{strategy}
As our RRM uses a fixed-anchor re-inference approach, we conduct comparative experiments on the Geometry3K dataset to verify the effectiveness of this mechanism, comparing fixed anchors with random anchor re-inference. For the stochastic retracing variant, the retracing rate $\omega$ is sampled from a uniform distribution, such that $\omega \sim \mathcal{U}(0, 1)$ at each step.

The results in Table \ref{tab:sampling_ablation} and Table \ref{tab:efficiency} indicate that while accuracy remains similar, the stochastic retracing strategy results in a 1.13$\times$ computational overhead compared to our fixed-rate approach. This efficiency gap stems from two main factors. First, when the retracing rate $\omega$ is randomly chosen from the range $[0,1]$ and happens to be small, the model is forced to re-generate the entire problem description, which wastes time and computing resources. Second, since GRPO requires all samples in a group to finish before the next step, the large variation in response lengths caused by a random $\omega$ creates a bottleneck: the entire process must wait for the longest redundant response to complete. Furthermore, in a self-evolution setting without explicit ground-truth labels, a stable training strategy is crucial to prevent the model from falling into local optima or training collapse, allowing it to reach proficiency more rapidly. In contrast, CSRS uses a fixed anchor and significantly improves training speed, which keeps the generation length consistent and stable. 

% --- Table 8: Ablation ---
\begin{table}[t]
  \centering
  \small % 保持与 Table 7/9 统一的字号
  \caption{Ablation of sampling strategies.}
  \label{tab:sampling_ablation}
  
  % @{\hspace{2pt}} 强制减小第一列和第二列之间的物理距离
  % 你可以根据需要把 2pt 改为 0pt 甚至更小
  \begin{tabular}{l @{\hspace{5pt}} cccc}
    \toprule
    \textbf{Method} & \textbf{M-Vis.} & \textbf{M-Ver.} & \textbf{M-Vist.} & \textbf{WeMath} \\ 
    \midrule
    CSRS (Ran.) & 27.63 & 46.35 & 67.52 & 72.01 \\
    CSRS (Fix.) & 27.97 & 46.01 & 67.81 & 71.77 \\ 

    \bottomrule
  \end{tabular}
\end{table}

\begin{table}[t]
    \centering
    \small
    \caption{Efficiency and Accuracy Comparison.}
    \label{tab:efficiency}
    % 使用 tabular* 并设定宽度为 \columnwidth
    \begin{tabular*}{\columnwidth}{@{\extracolsep{\fill}}lcc@{}}
        \toprule
        Method & Relative Training Time & Acc. \\
        \midrule
        CSRS Fixed & 1.0$\times$ & 27.97 \\
        CSRS Random & 1.13$\times$ & 27.63 \\
        \bottomrule
    \end{tabular*}
\end{table}

\subsection{Case Study}
\label{case_study}

In this section, we demonstrate how CSRS operates on mathematical problems during the training process. As illustrated in following boxes,  within the initial set of maternal trajectories generated by the model, the majority of reasoning paths exhibit typical errors such as relying on superficial mathematical cues, convoluted logical reasoning, and incorrect initial directions. In this instance, only a single correct reasoning path exists. Under a conventional majority voting scheme, the model would inevitably evolve toward its own distributional bias (e.g., the incorrect answer "47"). However, with the application of CSRS, a higher frequency of correct reasoning paths emerges within the re-inference set. Furthermore, correct paths with low frequency are no longer penalized with a zero reward, thereby preserving valuable learning signals.

\begin{PromptBox}{1. Maternal Rollouts for Problem}
\textbf{Maternal Response 1}: To find angle $BDA$, we note that the inscribed angle $BDC$ intercepts arc $CB$ $(94^\circ)$, so $\angle BDC = 47^\circ$. Since $AB \parallel DC$, $\angle BDA = \angle BDC$. Therefore, $\angle BDA = 47^\circ$.

\textbf{Maternal Response 2}: Using cyclic quadrilateral properties, $\angle BCA$ intercepts arc $CB$, so $\angle BCA = 47^\circ$. Opposite angles in a cyclic quadrilateral sum to $180^\circ$, so $\angle BDA = 180^\circ - 47^\circ = 133^\circ$.

\textbf{Maternal Response 3}: $\angle BDC$ intercepts arc $CB$, giving $\angle BDC = 47^\circ$. Since $AB \parallel DC$, $\angle BDA = \angle BDC$. Therefore, $\angle BDA = 47^\circ$.

\textbf{Maternal Response 4}: $\angle BDC = \frac{1}{2} \cdot \widehat{CB} = 47^\circ$. Due to the parallel lines, $\angle BDA = \angle BDC$. Thus, $\angle BDA = 47^\circ$.

\textbf{Maternal Response 5}: $\angle BDA$ is an inscribed angle intercepting the same arc as central angle $BZA$. So $\angle BDA = \frac{1}{2} \cdot \angle BZA = \frac{1}{2} \times 104^\circ = 52^\circ$.

\textbf{Maternal Response 6}: Since $AB \parallel DC$, $\angle BAC = \angle BDC = \frac{1}{2} \cdot \widehat{CB} = 47^\circ$. And $\angle BDA = \angle BAC$, so $\angle BDA = 47^\circ$.

\textbf{Maternal Response 7}: In the cyclic quadrilateral, $\angle BCA$ intercepts arc $CB$, so $\angle BCA = 47^\circ$. The opposite angle sum gives $\angle BDA = 180^\circ - 47^\circ = 133^\circ$.

\textbf{Maternal Response 8}: We find arc $BA$ from central angle $BZA$: $\widehat{BA} = 2 \times 104^\circ = 208^\circ$. $\angle C$ intercepts arc $CB$: $\angle C = 47^\circ$. Using opposite angle sum in the cyclic quadrilateral, $\angle A = 133^\circ$. Due to parallel lines, $\angle BDA = \angle C = 47^\circ$.
\end{PromptBox}

\begin{PromptBox}{2. Cut rollouts for Problem}
\textbf{Re-inference Response 9}: The central angle $BZA$ is given as $104^\circ$. The inscribed angle $BDA$ subtends the same arc $BA$. By the inscribed angle theorem, $\angle BDA = \frac{1}{2} \cdot \angle BZA = 52^\circ$. The parallel lines condition is not needed for this calculation.

\textbf{Re-inference Response 15}: The reasoning is as follows: The central angle $BZA$ is $104^\circ$. Since the inscribed angle $BDA$ subtends the same arc $BA$, its measure is half of the central angle, which is $52^\circ$. The information about parallel lines is not used.

\textbf{Re-inference Response 18}: $\angle BDA$ subtends arc $BA$, and central angle $BZA$ subtends the same arc. By the inscribed angle theorem, $\angle BDA = \frac{1}{2} \times 104^\circ = 52^\circ$.

\textbf{Re-inference Response 35}: We use the inscribed angle theorem: The measure of an inscribed angle is half the measure of the central angle that subtends the same arc. Therefore, $\angle BDA = \frac{1}{2} \times 104^\circ = 52^\circ$.

\textbf{Re-inference Response 39}: The measure of $\angle BDA$ is determined by the central angle $BZA$ ($104^\circ$) since they subtend the same arc $BA$. Applying the inscribed angle theorem gives $\angle BDA = 52^\circ$. The parallel condition is not necessary for this solution.
\end{PromptBox}

The box3 and box4 illustrate the specific formatting of prompts and responses are used during the training phase. Notably, the entire reasoning trajectory is encapsulated within <think></think> tags to facilitate structured cognitive processing, while the final conclusion is presented within a \textbackslash boxed\{\} environment for automated extraction.

\begin{PromptBox}{3. Maternal Rollout Example}
    \textbf{prompt}:"system You are a helpful assistant. user For the pair of similar figures, use the given areas to find $x$. You FIRST think about the reasoning process as an internal monologue and then provide the final answer. The reasoning process MUST BE enclosed within <think> </think> tags. The final answer MUST BE put in \textbackslash \textbackslash boxed{}.assistant"
    
    \textbf{output}:"<think> To find the value of  x , we need to use the properties of similar figures. The ratio of the areas of two similar figures is equal to the square of the ratio of their corresponding side lengths.1. Let the ratio of the corresponding side lengths be  k.  2. The area of the smaller figure is  50  $in^2$.  3. The area of the larger figure is  72  $in^2.4$. The ratio of the areas is  \textbackslash frac\{50\}\{72\} .5. Since the areas are proportional to the square of the side lengths, we have  $k^2$ = \textbackslash frac\{50\}\{72\} . 6. Solving for  k, we get  k = \textbackslash sqrt\{\textbackslash frac\{50\}\{72\}\} = \textbackslash sqrt\{\textbackslash frac\{25\}\{36\}\} = \textbackslash frac\{5\}\{6\} . 7. The side length  x  corresponds to the smaller figure, and the side length of the larger figure is 6 inches. 8. Therefore,  x = 6  times \textbackslash frac\{5\}\{6\} = 5 . So, the value of  x  is 5  inches.</think> \textbackslash \textbackslash boxed\{5\}"
\end{PromptBox}

\begin{PromptBox}{4. Retracing Rollout Example}
    \textbf{prompt}:"system You are a helpful assistant. user Find $x$. Assume that segments that appear to be tangent are tangent. You FIRST think about the reasoning process as an internal monologue and then provide the final answer. The reasoning process MUST BE enclosed within <think> </think> tags. The final answer MUST BE put in \textbackslash \textbackslash boxed\{\}. assistant <think> To find the value of  x, we need to use the property of tangent segments to a circle.assistant"
    
    \textbf{output}:"<think>The tangent segments from an external point to a circle are equal in length.In the given diagram, the segment that appears to be tangent from the external point to the circle is 14, and the segment that appears to be tangent to the circle from the same external point is  x .Since the segments are tangent to the circle from the same external point, they are equal in length. Therefore, we can set up the equation: x = 14 So, the value of  x  is 14.</think> \textbackslash\textbackslash boxed\{14\}"
\end{PromptBox}

As a representative error analysis, box5 illustrates how logical bias propagates even after re-inference: In our observations, when re-inference is initiated from a retracing anchor point, the resulting trajectories exhibit two distinct behaviors: logical persistence and strategic reflection. If the initial reasoning direction is flawed, persistent trajectories often lead to consistent but incorrect conclusions. Conversely, the reflective paths demonstrate a self-correction capability, where the model deviates from the original erroneous logic to discover the valid solution.

\begin{PromptBox}{5. Bad Cases}
    \textbf{Concatenated Prompt}: You are a helpful assistant. \\
    user. Find $JL$, if $KP = 4$ and $PM = 7$. You FIRST think about the reasoning process as an internal monologue and then provide the final answer. The reasoning process MUST BE enclosed within <think> </think> tags. The final answer MUST BE put in \textbackslash boxed\{\}. In the diagram, we have a kite shape with the diagonals as the main focus. The diagonals of a kite are perpendicular to each other and bisect each other at right angles. Here, diagonal $KS$ is perpendicular to diagonal $JL$ at point $P$.

    \vspace{0.5em}
    \textbf{Correct Response (Maternal)}: \\
    The diagonals of the kite/rhombus intersect at point $P$ and bisect each other. Since $KP = 4$ and $PM = 7$, the total length of diagonal $JL$ is the sum of these segments: $JL = KP + PM = 4 + 7 = 11$. $\boxed{11}$

    \vspace{0.5em}
    \textbf{Incorrect Response 1}: \\
    Treating the shape as a kite where the diagonals are perpendicular and bisect each other, one might incorrectly assume $PM$ is half of $JL$, leading to $JL = 2 \times PM = 14$. $\boxed{14}$

    \vspace{0.5em}
    \textbf{Incorrect Response 2}: \\
    Assuming the figure is a rhombus, the side length $JL$ might be calculated as the hypotenuse of the right triangle formed by the half-diagonals: $JL = \sqrt{KP^2 + PM^2} = \sqrt{65}$. $\boxed{\sqrt{65}}$

    \vspace{0.5em}
    \textbf{Incorrect Response 3}: \\
    Interpreting the figure as a parallelogram where diagonals bisect each other, one might incorrectly sum the segments and double them: $JL = 2 \times (KP + PM) = 22$. $\boxed{22}$

    \vspace{0.5em}
    \textbf{Incorrect Response 4}: \\
    Assuming symmetry incorrectly leads to $JL = KP = 4$, then the full diagonal length is doubled: $JL = 2 \times 4 = 8$. $\boxed{8}$
\end{PromptBox}

\begin{figure*}[t] % 星号使其横跨双栏，[t] 通常建议放在页面顶部
    \centering
    \includegraphics[width=0.95\textwidth]{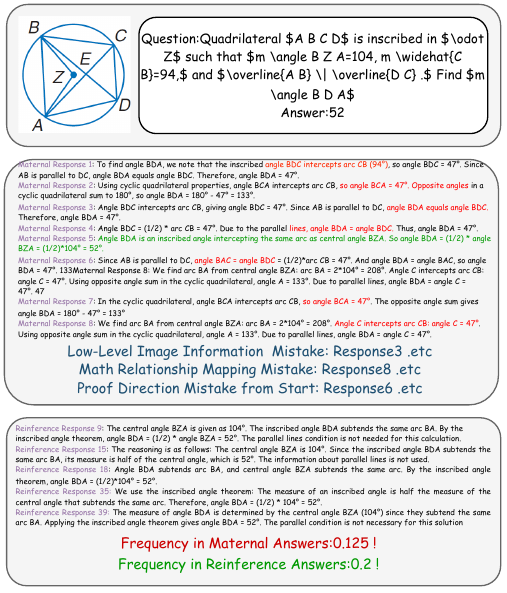} % 建议调大宽度，比如 0.8\textwidth
    \caption{Case studies in geometric reasoning.}
    \label{fig:case_pic}
\end{figure*}

\end{document}